\documentclass[10pt,twocolumn,letterpaper]{article}
\usepackage[utf8]{inputenc}

\usepackage{amssymb}
\usepackage{amsmath}
\usepackage{mathtools}

\usepackage{algorithm}
\usepackage{algpseudocode}

\usepackage{times}
\usepackage{epsfig}
\usepackage{graphicx}
\usepackage{amssymb}

\usepackage{amsthm}

\usepackage{geometry}
 \geometry{
 a4paper,
 total={170mm,257mm},
 left=20mm,
 top=20mm,
 }

\newtheorem{theorem}{Theorem}[section]


\usepackage[pagebackref=true,breaklinks=true,letterpaper=true,colorlinks,bookmarks=false]{hyperref}




\title{Mixing ADAM and SGD: a Combined Optimization Method}

\author{Nicola Landro, Ignazio Gallo, Riccardo La Grassa\\
University of Insubria\\
\{nlandro, ignazio.gallo, rlagrassa\}@uninsubria.it
}

\begin{document}

\date{}
\maketitle

\begin{abstract}
Optimization methods (optimizers) get special attention for the efficient training of neural networks in the field of deep learning. 
In literature there are many papers that compare neural models trained with the use of different optimizers. 
Each paper demonstrates that for a particular problem an optimizer is better than the others but as the problem changes this type of result is no longer valid and we have to start from scratch.
In our paper we propose to use the combination of two very different optimizers but when used simultaneously they can overcome the performances of the single optimizers in very different problems.
We propose a new optimizer called MAS (Mixing ADAM and SGD) that integrates SGD and ADAM simultaneously by weighing the contributions of both through the assignment of constant weights.
Rather than trying to improve SGD or ADAM we exploit both at the same time by taking the best of both.
We have conducted several experiments on images and text document classification, using various CNNs, and we demonstrated by experiments that the proposed MAS optimizer produces better performance than the single SGD or ADAM optimizers.
The source code and all the results of the experiments are available online at the following link \textit{{{https://gitlab.com/nicolalandro/multi\_optimizer}}}
\end{abstract}

\section{Introduction}
Stochastic Gradient Descent~\cite{robbins1951stochastic} (SGD) is the dominant method for solving optimization problems. 
SGD iteratively updates the model parameters by moving them in the direction of the negative gradient calculated on a mini-batch scaled by the step length, typically referred to as the learning rate. 
It is necessary to decay this learning rate as the algorithm proceeds to ensure convergence. Manually adjusting the learning rate decay in SGD is not easy. To address this problem, several methods have been proposed that automatically reduce the learning rate.
The basic intuition behind these approaches is to adaptively tune the learning rate based only on recent gradients; therefore, limiting the reliance on the update to only a few past gradients. 
ADAptive Moment estimation~\cite{kingma2014adam} (ADAM) is one of several methods based on this update mechanism~\cite{zaheer2018adaptive}.
On the other hand, adaptive optimization methods such as ADAM, even though they have been proposed to achieve a rapid training process, are observed to generalize poorly with respect to SGD or even fail to converge due to unstable and extreme learning rates~\cite{luo2019adaptive}.
To try to overcome the problems of both of these types of optimizers and at the same time try to exploit their advantages, we propose an optimizer that combines them in a new optimizer.

\begin{figure}
    \centering
    \includegraphics[width=1.0\columnwidth]{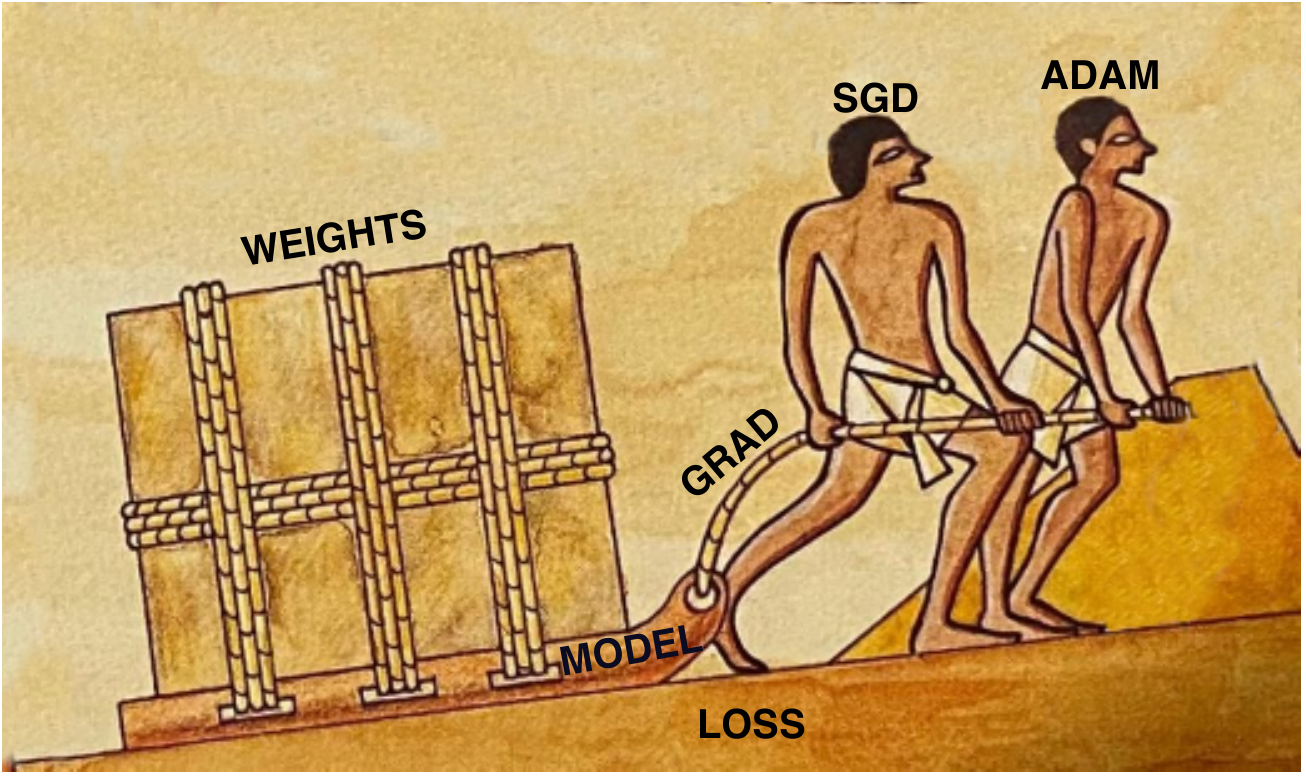}
    \caption{Intuitive representation of the idea behind the proposed MAS (Mixing ADAM and SGD) optimizer: the weights are modified simultaneously by both the optimizers.}
    \label{fig:proposed}
\end{figure}

As depicted in Figure~\ref{fig:proposed}, the basic idea of the MAS optimizer here proposed, is to combine two different known optimizers and automatically go quickly towards the direction of both on the surface of the loss function when the two optimizers agree (see geometric example in Figure~\ref{fig:intuitive}a). While when the two optimizers used in the combination do not agree, our solution always goes towards the predominant direction between the two but slowing down the speed (see example of Figure~\ref{fig:intuitive}b).

Analyzing the literature there are many papers that compare neural models trained with the use of different optimizers~\cite{bera2020analysis,graves2013generating,duchi2011adaptive,zeiler2012adadelta} or that propose modifications for existing optimizers~\cite{luo2018adaptive,kobayashi2020scw,zhang2018improved}, always aimed at improving the results on a subset of problems.
Each paper demonstrates that an optimizer is better than the others but as the problem changes this type of result is no longer valid and we have to start from scratch.
In our paper we propose to combine simultaneously two different optimizers like SGD and ADAM, to overcome the performances of the single optimizers in very different problems.

\textbf{Contributions}.
In light of the above, we set out the main contributions of our paper.
\begin{itemize}
    \item We demonstrate experimentally that the combination of different optimizers in a new optimizer that incorporates them leads to a better generalization capacity in different contexts.
    \item We apply the proposed solution to well-known computer vision and text analysis problems and in both these domains, we obtain excellent results, demonstrating that our solution exceeds the generalization capacity of the starting ADAM and SGD optimizers.
    \item We open the way to this new type of optimizers with which it will be possible to exploit the positive aspects of many existing optimizers, combining them with each others to build new and more efficient optimizers.
    \item To facilitate the understanding of the MAS optimizer and to allow other researchers to be able to run the experiments and extend this idea, we release the source code and setups of the experiments at the following URL~\cite{torch_code}
\end{itemize}

\section{Related Work}
In the literature, there aren't many papers that try to combine different optimizers together.
In this section, we report some of the more recent papers that in some ways use different optimizers in the same learning process.

In~\cite{keskar2017improving} the authors investigate a hybrid strategy, called \textbf{SWATS}, which starts training with an adaptive optimization method and switches to SGD when appropriate.
This idea starts from the observation that despite superior training results, adaptive optimization methods such as ADAM generalize poorly compared to SGD because they tend to work well in the early part of the training, but are overtaken by SGD in the later stages of training.
In concrete terms, SWATS is a simple strategy that goes from Adam to SGD when an activation condition is met.
The experimental results obtained in this paper are not so different from ADAM or SGD used individually, so the authors concluded that using SGD with perfect parameters is the best idea.
In our proposal, we want to combine two well-known optimizers to create a new one that uses simultaneously two different optimizers from the beginning to the end of the training process.

\textbf{ESGD} is a population-based Evolutionary Stochastic Gradient Descent framework for optimizing deep neural networks~\cite{cui2018evolutionary}.  
In this approach, individuals in the population optimized with various SGD-based optimizers using distinct hyper-parameters are considered competing species in a context of coevolution.
The authors experimented with optimizer pools consisting of SGD and ADAM variants where it is often observed that ADAM tends to be aggressive early on, but stabilizes quickly, while SGD starts slowly but can reach a better local minimum. ESGD can automatically choose the appropriate optimizers and their hyper-parameters based on the fitness value during the evolution process so that the merits of SGD and ADAM can be combined to seek a better local optimal solution to the problem of interest.
In the method we propose, we do not need another approach, such as the evolutionary one, to decide which optimizer to use and with which hyper-parameters, but it is the same approach that decides at each step what is the contribution of SGD and that of ADAM.

In this paper, we also compare our MAS optimizer with
\textbf{ADAMW}~\cite{loshchilov2017decoupled}~\cite{loshchilov2018fixing}, which is a version of ADAM in which weight decay is decoupled from $L_2$ regularization.
This optimizer offers good generalization performance, especially for text analysis, and since we also perform some experimental tests on text classification, then we also compare our optimizer with ADAMW.
In fact, ADAMW is often used with BERT~\cite{devlin2018bert} applied to well-known datasets for text classification.


\begin{figure}
    \centering
    \includegraphics[width=0.9\columnwidth]{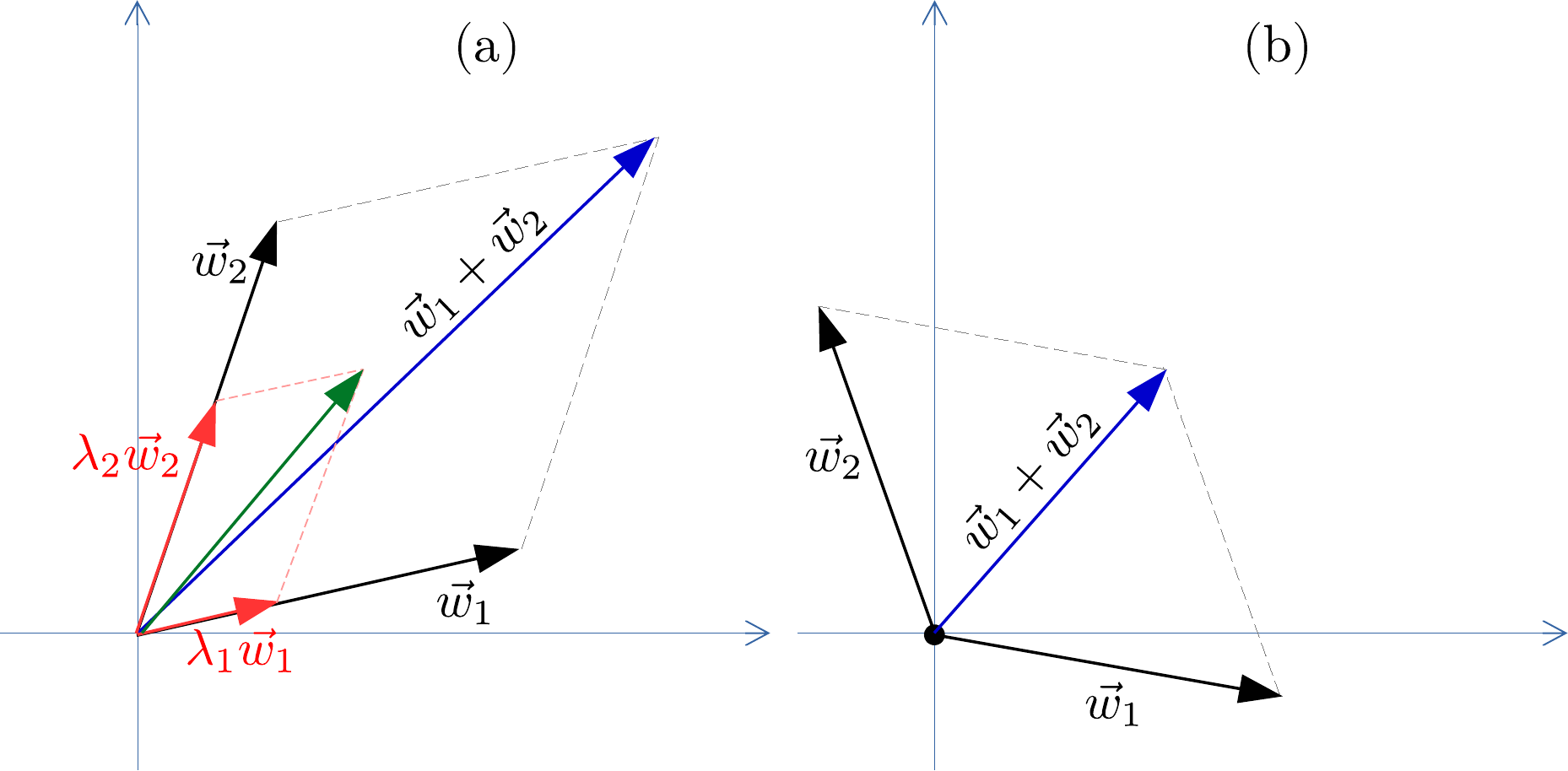}
    \caption{Graphical representation of the basic idea for the proposed MAS optimizer. In (a), if the two translations $\vec{w}_1$ and $\vec{w}_2$ obtained from two different optimizers are similar, then the resulting translation $\vec{w}_1 + \vec{w}_2$ is boosted. In (b), if the translations $\vec{w}_1$ and $\vec{w}_2$ go in two different directions, then the resulting translation is smaller. We also use two hyper-parameters $\lambda_1$ and $\lambda_2$ to weigh the contribution of the two optimizers.}
    \label{fig:intuitive}
\end{figure}

\begin{algorithm} \scriptsize
\caption{Stochastic Gradient Descent (SGD)}
\textbf{Input:} the weights $w_k$, learing rate $\eta$, weight decay $\gamma$, dampening $d$, boolean $nesterov$
\begin{algorithmic}[1]
\State{$v_0=0$}
\Function{$\Delta_\text{SGD}$}{$w_{k}$, $\nabla$, $\gamma$, $\mu$, $d$, $nesterov$}
    \State{$\widehat{\nabla} = \nabla + w_{k} \cdot \gamma$}
    \If{$m \neq 0$}
        \If{$k = 0$}
            \State{$v_k=\widehat{\nabla}$}
        \Else
            \State{$v_k = v_{k-1} \cdot \mu + \widehat{\nabla} \cdot (1 - d)$}
        \EndIf
        \If{$nesterov = True$}
            \State{$v_k = \widehat{\nabla} + v_k \cdot \mu$}
        \EndIf
    \EndIf
    \State{\Return{$v_k$}}
\EndFunction

\For{\texttt{batches}}
    \State{$w_{k+1} = w_{k} - \eta_s \cdot \Delta_\text{SGD}(w_{k}, \nabla, \gamma, \mu, d, nesterov)$}
\EndFor

\end{algorithmic}
\label{alg:sgd}
\end{algorithm}

\section{Preliminaries}
Training neural networks is equivalent to solving the following optimization problem:
\begin{equation}
\min_{w \in \mathbb{R}^n} \mathcal{L}(w)
\end{equation}
where $\mathcal{L}$ is a loss function and $w$ are the weights. 
The iterations of an \textbf{SGD}~\cite{robbins1951stochastic} optimizer can be described as:
\begin{equation}
\label{eq:sgd}
w_{k+1} = w_k - \eta \cdot \nabla\mathcal{L}(w)
\end{equation}
where $w_k$ denotes the weights $w$ at the $k$-th iteration, $\eta$ denote the learning rate and $\nabla\mathcal{L}(w)$ denotes the stochastic gradient calculated at $w_k$.
To propose a stochastic gradient calculated as generic as possible, we introduce the \textbf{weight decay}~\cite{krogh1992simple} strategy, often used in many SGD implementations. 
The weight decay can be seen as a modification of the $\nabla\mathcal{L}(w)$ gradient, and in particular, we can describe it as follows:
\begin{equation}
\label{eq:wd}
    \widehat{\nabla}\mathcal{L}(w_k) = \nabla \mathcal{L}(w_k)+ w_k \cdot \gamma
\end{equation}
where $\gamma$ is a small scalar called weight decay.
We can observe that if the weight decay $\gamma$ is equal to zero then $\widehat{\nabla}\mathcal{L}(w) = \nabla\mathcal{L}(w)$.
Based on the above, we can generalize the Eq.~\ref{eq:sgd}  to the following one that include weight decay:
\begin{equation}
w_{k+1} = w_k - \eta \cdot \widehat{\nabla}\mathcal{L}(w)
\end{equation}
%
%
The SGD algorithm described up to here is usually used in combination with \textbf{momentum}, and in this case, we refer to it as \textbf{SGD(M)}~\cite{sutskever2013importance} (Stochastic Gradient Descend with Momentum).
SGD(M) almost always works better and faster than SGD because the momentum helps accelerate the gradient vectors in the right direction, thus leading to faster convergence.
The iterations of SGD(M) can be described as follows:
\begin{equation}
\label{eq:sgd_vk}
v_{k} = \mu \cdot v_{k-1} + \widehat{\nabla}\mathcal{L}(w)
\end{equation}
\begin{equation}
w_{k+1} = w_k - \eta \cdot v_{k}
\end{equation}
where $\mu \in [0, 1)$ is the momentum parameter and for $k=0$, $v_0$ is initialized to $0$.
%
The simpler methods of momentum have an associated \textbf{damping} coefficient~\cite{damaskinos2018asynchronous}, which controls the rate at which the momentum vector decays.
The dampening coefficient changes the momentum as follow:
\begin{equation}
\label{eq:sgd_vk_dampening}
v_{d_k} = m \cdot v_{k-1} + \widehat{\nabla}\mathcal{L}(w) \cdot (1 - d)
\end{equation}
where $0 \leq d <1$, is the dampening value, so the final SGD with momentum and dampening coefficients can be seen as follow:
\begin{equation}
w_{k+1} = w_k - \eta \cdot v_{d_k}
\end{equation}
%
\textbf{Nesterov} momentum~\cite{liu2018accelerating} is an extension of the moment method that approximates the future position of the parameters that takes into account the movement.
The SGD with nesterov transforms again the $v_k$ of Eq.~\ref{eq:sgd_vk}, more precisely: 
\begin{equation}
\label{eq:sgd_vk_nesterov}
v_{n_k} = \widehat{\nabla}\mathcal{L}(w) + v_{d_k} \cdot m
\end{equation}
\begin{equation}
\label{eq:sgd_vk_nesterov_w_update}
w_{k+1} = w_k - \eta \cdot v_{n_k}
\end{equation}
The complete SGD algorithm, used in this paper, is showed in Alg.~\ref{alg:sgd}.

\begin{algorithm} \scriptsize
\caption{ADaptive Moment Estimation (ADAM)}
\textbf{Input:} the weights $w_k$, learing rate $\eta$, weight decay $\gamma$, $\beta_1$, $\beta_2$, $\epsilon$, boolean $amsgrad$
\begin{algorithmic}[1]
\State{$m_0=0$}
\State{$v^a_0=0$}
\State{$\widehat{v}_0=0$}
\Function{$\Delta_\text{ADAM}$}{$w_{k}$, $\nabla$, $\eta$, $\gamma$, $\beta_1$, $\beta_2$, $\epsilon$, $amsgrad$}
    \State{$\widehat{\nabla} = \nabla + w_{k} \cdot \gamma$}
    \State{$m_k = m_{k-1} \cdot \beta_1 + \widehat{\nabla} \cdot (1-\beta_1)$}
    \State{$v^a_k = v^a_{k-1} \cdot \beta_2 + \widehat{\nabla} \cdot \widehat{\nabla} \cdot (1-\beta_2)$}
    \If{$amsgrad = True$}
        \State{$\widehat{v}_k = \text{max}(\widehat{v}_{k-1}, v^a_k )$}
        \State{$denom = \frac{\sqrt{\widehat{v}_k }}{\sqrt{1 - \beta_2} + \epsilon}$}
    \Else
        \State{$denom = \frac{\sqrt{v_k^a}}{\sqrt{1 - \beta_2} + \epsilon}$}
    \EndIf
    \State{$\eta_a = \frac{\eta}{1 - \beta_1}$}
    \State{$d_k = \frac{m_k}{denom}$}
    \State{\Return{$d_k, \eta_a$}}
\EndFunction

\For{\texttt{batches}}
    \State{$d_k, \eta_a = \Delta_\text{ADAM}(w_{k}, \nabla, \eta, \gamma, \beta_1, \beta_2, \epsilon, amsgrad)$}
    \State{$w_{k+1} = w_{k} - \eta_a \cdot d_k$}
\EndFor

\end{algorithmic}
\label{alg:adam}
\end{algorithm}

\textbf{ADAM}~\cite{kingma2014adam} (ADAptive Moment estimation) optimization algorithm is an extension to SGD that has recently seen broader adoption for deep learning applications in computer vision and natural language processing.
ADAM's equation for updating the weights of a neural network by iterating over the training data can be represented as follows:
\begin{equation}
\label{eq:adam_mk}
m_{k} = 
    \beta_1 \cdot m_{k-1} 
    + (1 - \beta_1) 
        \cdot 
      \widehat{\nabla}\mathcal{L}(w_k)
\end{equation}
\begin{equation}
\label{eq:adam_vk}
v_{k}^a = 
    \beta_2 \cdot v_{k-1} 
    + (1 - \beta_2) 
        \cdot 
      \widehat{\nabla}\mathcal{L}(w_k)^2
\end{equation}
\begin{equation}
w_{k+1} = w_k - \eta \cdot 
    \frac{
        \sqrt{1-\beta_2}
        }{
        1- \beta_1
        } \cdot
    \frac{
            m_{k}
        }{
            \sqrt{v_{k}^a} + \epsilon
        }
\end{equation}
where $m_k$ and $v_k^a$ are estimates of the first moment (the mean) and the second moment (the non-centered variance) of the gradients respectively, hence the name of the method.
$\beta_1$, $\beta_2$ and $\epsilon$ are three new introduced hyper-parameters of the algorithm.
%
\textbf{AMSGrad}~\cite{chen2018closing} is a stochastic optimization method that seeks to fix a convergence issue with Adam based optimizers. 
AMSGrad uses the maximum of past squared gradients $v_{k-1}$ rather than the exponential average to update the parameters:
\begin{equation}
    \widehat{v}_k = \text{max}(\widehat{v}_{k-1}, v_{k}^a)
\end{equation}

\begin{equation}
w_{k+1} = w_k - \eta \cdot 
    \frac{
        \sqrt{1-\beta_2}
        }{
        1- \beta_1
        } \cdot
    \frac{
            m_{k}
        }{
            \sqrt{\widehat{v}_k} + \epsilon
        }
\end{equation}
The complete ADAM algorithm, used in this paper, is showed in Alg.~\ref{alg:adam}.

\section{Proposed Approach}
In this section, we develop the proposed new optimization method called MAS. 
Our goal is to propose a strategy that automatically combines the advantages of an adaptive method like ADAM, with the advantages of SGD, throughout the entire learning process. 
This combination of optimizers is summed as shown in Fig.~\ref{fig:intuitive} where $w_1$ and $w_2$ represent the displacements on the ADAM and SGD on the surface of the loss function, while $w1 + w2$ represents the displacement obtained thanks to our optimizer.
Below we explain each line of the MAS algorithm represented in Alg.~\ref{alg:proposed}.

The MAS optimizer has only two hyper-parameters which are $\lambda_a$ and $\lambda_s$ used to balance the contribution of ADAM and SGD respectively.
In our experiments, we use only one learning rate $\eta$ for both ADAM and SGD, but it is still possible to differentiate between the two learning rates.
In addition to the hyper-parameters typical of the  MAS optimizer, all the hyper-parameters of SGD and ADAM are also needed.
In this paper, we assume to use the most common implementation of gradient descent used in the field of deep learning, namely the mini-batch gradient descent which divides the training dataset into small batches that are used to calculate the model error and update the model coefficients $w_k$.
For each mini-batch, we calculate the contribution derived from the two components ADAM and SGD and then update all the coefficients as described in the three following subsections.

\subsection{ADAM component}
The complete ADAM algorithm is defined in Alg.~\ref{alg:adam}.
In order to use ADAM in our optimizer, we have extracted the $\Delta_{ADAM}$ function which calculates and returns the increments $d_k$ for the coefficients $w_k$, as defined in Eq.~\ref{eq:adam_d}. 
\begin{equation}
    \label{eq:adam_d}
    d_k = \frac{
           \sqrt{1-\beta_2} \cdot m_{k}
        }{
            \sqrt{\widehat{v}_{k}} + \epsilon
        }
\end{equation}
The same $\Delta_{ADAM}$ function also returns the new learning rate $\eta_a$ defined in Eq.~\ref{eq:adam_eta}, useful when a variable learning rate is used.
In this last case, MAS uses $\eta_a$ to calculate a new learning rate at each step.
%
\begin{equation}
    \label{eq:adam_eta}
     \eta_a=\frac{\eta}{1- \beta_1}
\end{equation}
Now, having $\eta_a$  and $d_k$, we can directly modify the weights $w_k$  exactly as done in the ADAM optimizer and described in Eq.~\ref{eq:adam_update_simple}.
\begin{equation}
    \label{eq:adam_update_simple}
    w_{k+1} = w_k - \eta_a \cdot d_{k}
\end{equation}
However, we just skip this last step and use $eta_a$ and $d_k$ for our MAS optimizer.

\subsection{SGD component}
As for the ADAM component, also the SGD component, defined in Alg.~\ref{alg:sgd}, has been divided into two parts: the $\Delta_{SGD}$ function which returns the increment to be given to the weight $w_k$, and the formula to update the weights as defined in Eq.~\ref{eq:sgd_vk_nesterov_w_update}.
The $v_{n_k}$ value returned by the $\Delta_{SGD}$ function is exactly the value defined in Eq.~\ref{eq:sgd_vk_nesterov}, which we will use directly for our MAS optimizer.


\subsection{The MAS optimizer}
The proposed approach can be summarized with the following Eq.~\ref{eq:proposed}
\begin{equation}
    \label{eq:proposed}
w_{k+1} = w_k - 
    ( \lambda_s \cdot \eta + \lambda_a \cdot \eta_a)
    \cdot ( \lambda_s \cdot v_{n_k}
    +
    \lambda_a \cdot d_k)
\end{equation}
where $\lambda_s$ is a scalar for the SGD component and $\lambda_a$ is another scalar for the ADAM component used for balancing the two contribution of the two optimizers. 
$\eta$ is the learning rate of the proposed MAS optimizer, while $\eta_a$ is the learning rate of ADAM defined in Eq.~\ref{eq:adam_eta}.
$d_{k}$ and $v_{n_k}$ are the two increments define in Eq.~\ref{eq:adam_d} and Eq.~\ref{eq:sgd_vk_nesterov} respectively.

Eq.~\ref{eq:proposed} can be expanded in the following Eq.~\ref{eq:proposed_extensive} to make explicit what are the elements involved in the weights update step used by our MAS optimizer.
%
\begin{equation}
    \label{eq:proposed_extensive}
\begin{split}
w_{k+1} = w_k - 
    ( \lambda_s \cdot \eta + \lambda_a \cdot \frac{\eta}{1- \beta_1})
    \cdot \\
    \cdot ( \lambda_s \cdot v_{n_{k}}
    +
    \lambda_a \cdot
    \frac{
           \sqrt{1-\beta_2} \cdot m_{k}
        }{
            \sqrt{\widehat{v}_{k}} + \epsilon
        })
\end{split}
\end{equation}
where $\beta_1$ and $\beta_2$ are two parameters of the ADAM optimizer, $v_{k}^a$ is defined in Eq.~\ref{eq:adam_vk}, and $m_{k}$ is defined in Eq.~\ref{eq:adam_mk}.

\begin{algorithm} \scriptsize
\caption{ Mixing ADAM and SGD (MAS)}
\textbf{Input:} the weights $w_k$, $\lambda_a$, $\lambda_s$, learing rate $\eta$, weight decay $\gamma$, other SGD and ADAM paramiters \dots
\begin{algorithmic}[1]
    \For{\texttt{batches}}
        \State {$d_{k}, \eta_a = \Delta_\text{ADAM}(w_{k}, \nabla, \eta, \gamma, \dots)$}
        \State {$v_{n_k} = \Delta_\text{SGD}(w_{k}, \nabla, \gamma, \dots)$}
        \State {$merged = \lambda_s \cdot v_{n_k} + \lambda_a \cdot d_{k}$}
         \State {$\eta_{m} = \lambda_s \cdot \eta + \lambda_a \cdot \eta_a$}
        \State{$w_{k+1} = w_{k} - \eta_{m} \cdot merged$}
    \EndFor
\end{algorithmic}
\label{alg:proposed}
\end{algorithm}

The MAS algorithm can be easily implemented by following the pseudo code defined in Alg.~\ref{alg:proposed} and by calling the two functions $\Delta_{ADAM}$ defined in Alg.~\ref{alg:adam} and $\Delta_{SGD}$ defined in Alg.~\ref{alg:sgd}.
We can also show that convergence is guaranteed for the MAS optimizer if we assume that convergence has been guaranteed for the two optimizers SGD and ADAM.

\begin{theorem}[MAS Cauchy necessary convergence condition]
\label{th:MAS_cauchy}
If ADAM and SGD are two optimizers whose convergence is guaranteed then the Cauchy necessary convergence condition is true also for MAS.
\end{theorem}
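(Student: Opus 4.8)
The plan is to read the ``Cauchy necessary convergence condition'' for the weight sequence $(w_k)$ as the statement that consecutive increments must vanish, i.e. $\lim_{k\to\infty}(w_{k+1}-w_k)=0$; equivalently, viewing the trajectory $w_k = w_0 - \sum_{j<k}\text{step}_j$ as the partial sums of a series, the term test forces each step to tend to zero. The entire proof then reduces to showing that the MAS increment defined by Eq.~\ref{eq:proposed} tends to $0$.

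First I would translate the two hypotheses into statements about increments. Since the SGD iteration $w_{k+1}=w_k-\eta\cdot v_{n_k}$ is assumed to converge, its increment must vanish, giving $\eta\cdot v_{n_k}\to 0$; likewise, since the ADAM iteration $w_{k+1}=w_k-\eta_a\cdot d_k$ converges, we obtain $\eta_a\cdot d_k\to 0$. These are the only two facts I will extract from the convergence assumption.

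Next I would expand the MAS increment from Eq.~\ref{eq:proposed} and distribute the product into four terms:
\begin{equation*}
w_{k+1}-w_k = -\bigl(\lambda_s^2\,\eta\,v_{n_k} + \lambda_s\lambda_a\,\eta\,d_k + \lambda_a\lambda_s\,\eta_a\,v_{n_k} + \lambda_a^2\,\eta_a\,d_k\bigr).
\end{equation*}
The first and last terms are constant multiples of the SGD and ADAM increments respectively, so they vanish by the previous step. The difficulty is the two cross terms $\eta\,d_k$ and $\eta_a\,v_{n_k}$, which pair the learning rate of one optimizer with the update direction of the other and are therefore not directly controlled by either convergence hypothesis.

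The key to disposing of the cross terms is the fixed relation $\eta_a=\eta/(1-\beta_1)$ from Eq.~\ref{eq:adam_eta}, with $1-\beta_1\in(0,1]$ a constant: it lets me rewrite $\eta\,d_k=(1-\beta_1)\,\eta_a\,d_k\to 0$ and $\eta_a\,v_{n_k}=\tfrac{1}{1-\beta_1}\,\eta\,v_{n_k}\to 0$. Since $\lambda_s$ and $\lambda_a$ are fixed scalars, each of the four terms tends to $0$, hence so does their sum, and therefore $w_{k+1}-w_k\to 0$. This establishes the Cauchy necessary condition for MAS and is exactly the step I expect to be the crux, since everything else is bookkeeping once the proportionality of the two learning rates is noted.
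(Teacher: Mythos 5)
Your proof is correct and takes essentially the same route as the paper's: both arguments reduce the statement to the fact that the individual increments $\eta\cdot v_{n_k}$ and $\eta_a\cdot d_k$ vanish (the term test applied to the convergent SGD and ADAM series), and both then exploit the constancy of the learning rates to kill the remaining terms. The only cosmetic difference is that the paper first divides out the constant $\eta$ and $\eta_a$ to conclude $v_{n_k}\to 0$ and $d_k\to 0$ and then treats $(\lambda_s\eta+\lambda_a\eta_a)$ as a fixed prefactor, whereas you keep the four-term expansion and dispose of the cross terms via $\eta_a=\eta/(1-\beta_1)$ --- the same underlying observation.
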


\begin{proof}
Under the conditions in which the convergence of ADAM and SGD is guaranteed~\cite{reddi2019convergence,lee2016gradient}, we can say that $\sum_{k=0}^{p}{\eta \cdot v_{n_k}}$ and $\sum_{k=0}^{p}{\eta_a \cdot d_k}$ converge at $\infty$ . 
That imply the following:
\begin{equation}
    \lim\limits_{p \to \infty} \eta \cdot v_{n_p} 
    = \lim\limits_{p \to \infty} \eta_a \cdot d_p
    = 0 
\end{equation}
We can observe that $\lim\limits_{p \to \infty} \sum_{k=0}^p\eta = \lim\limits_{p \to \infty} \sum_{k=0}^p\eta_a = \infty$ so we can obtain the following:
\begin{equation}
   \label{eq:dp_limit}
   \lim\limits_{p \to \infty} v_{n_p} 
    = \lim\limits_{p \to \infty} d_p
    = 0 
\end{equation}
The thesis is that 
$\sum_{k=0}^{p} (\lambda_s \cdot \eta + \lambda_a \cdot \eta_a) \cdot (\lambda_s \cdot v_{n_k} + \lambda_a \cdot d_k)$ respect the Cauchy necessary convergence condition, so:
$\lim\limits_{p \to \infty} (\lambda_s \cdot \eta + \lambda_a \cdot \eta_a) \cdot (\lambda_s \cdot v_{n_p} + \lambda_a \cdot d_p) = 0$ 
and for Eq.~\ref{eq:dp_limit}, this last equality is trivially true:
\begin{align}
    \label{eq:theorem}
    \lim\limits_{p \to \infty} (\lambda_s \cdot \eta + \lambda_a \cdot \eta_a) \cdot (\lambda_s \cdot v_{n_p} + \lambda_a \cdot d_p) = \nonumber \\
    (\lambda_s \cdot \eta + \lambda_a \cdot \eta_a) \cdot \lim\limits_{p \to \infty}  (\lambda_s \cdot 0 + \lambda_a \cdot 0)
    = 0
\end{align}

\end{proof}
\begin{theorem}
(MAS convercence)
If for $p \to \infty$ is valid that $\sum_{k=0}^{p}{\eta \cdot v_{n_k}}=\eta \cdot m_1$ and $\sum_{k=0}^{p}{\eta_a \cdot d_k}= \eta_a \cdot m_2$ where $m_1 \in \mathbb{R}$ and $m_2 \in \mathbb{R}$ are two finite real values,
then $MAS = \sum_{k=0}^{p} (\lambda_s \cdot \eta + \lambda_a \cdot \eta_a) \cdot (\lambda_s \cdot v_{n_k} + \lambda_a \cdot d_k) = \lambda_s^2 \cdot \eta \cdot m_1 + \lambda_a^2 \cdot \eta_a \cdot m_2 + \lambda_s \cdot \lambda_a \cdot \eta_a \cdot m_1 + \lambda_s \cdot \lambda_a \cdot \eta \cdot m_2$
\end{theorem}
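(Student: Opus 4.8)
The plan is to recognize that the claimed identity is purely algebraic and follows from the linearity of convergent series together with the fact that the learning rates $\eta$, $\eta_a$ and the mixing weights $\lambda_s$, $\lambda_a$ are all constant in the iteration index $k$. First I would unpack the two hypotheses. Since $\eta$ does not depend on $k$, the assumption $\sum_{k=0}^{p}\eta \cdot v_{n_k} = \eta \cdot m_1$ lets me pull $\eta$ out of the sum and divide through, giving $\lim_{p\to\infty}\sum_{k=0}^{p} v_{n_k} = m_1$; the identical manipulation on the second hypothesis yields $\lim_{p\to\infty}\sum_{k=0}^{p} d_k = m_2$. These two scalar limits are the only facts about the individual SGD and ADAM increments that the rest of the argument requires.

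Next I would observe that the prefactor $(\lambda_s \cdot \eta + \lambda_a \cdot \eta_a)$ appearing in every summand of the MAS series is itself independent of $k$, so it factors out of the sum entirely. What remains inside is $\sum_{k=0}^{p}(\lambda_s \cdot v_{n_k} + \lambda_a \cdot d_k)$, and by linearity of the (now assumed convergent) series this splits as $\lambda_s \sum_{k} v_{n_k} + \lambda_a \sum_{k} d_k$. Substituting the two limits from the first step collapses the inner series to the finite value $\lambda_s \cdot m_1 + \lambda_a \cdot m_2$. The final move is to distribute the product $(\lambda_s \cdot \eta + \lambda_a \cdot \eta_a)(\lambda_s \cdot m_1 + \lambda_a \cdot m_2)$, which yields exactly four terms: $\lambda_s^2 \cdot \eta \cdot m_1$, $\lambda_a^2 \cdot \eta_a \cdot m_2$, and the two cross terms $\lambda_s \cdot \lambda_a \cdot \eta \cdot m_2$ and $\lambda_s \cdot \lambda_a \cdot \eta_a \cdot m_1$, matching the stated right-hand side up to reordering.

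I do not expect a genuine obstacle, since no estimate or nontrivial bound is involved — the whole argument is an application of linearity. The one point deserving care is the implicit assumption that $\eta$, $\eta_a$, $\lambda_s$, $\lambda_a$ are held constant across iterations; if a variable learning-rate schedule were used then $(\lambda_s \cdot \eta + \lambda_a \cdot \eta_a)$ could no longer be extracted from the sum and this clean closed form would break. I would therefore state that constancy as a standing hypothesis, and note that the splitting of the series into two separate sums is legitimate precisely because each is assumed convergent to a finite limit, so no conditional-convergence reordering subtlety arises.
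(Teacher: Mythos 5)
Your proof is correct and follows essentially the same route as the paper's: both arguments reduce to pulling the constant factors $\eta$, $\eta_a$, $\lambda_s$, $\lambda_a$ out of the sums, using linearity of convergent series to substitute $\sum_k v_{n_k} = m_1$ and $\sum_k d_k = m_2$, and expanding the product into the four stated terms. The only cosmetic difference is ordering --- the paper distributes the product into four separate series before passing to the limit, whereas you factor out $(\lambda_s \cdot \eta + \lambda_a \cdot \eta_a)$, sum, and distribute at the end --- and your explicit remark that constancy of the learning rates in $k$ is what licenses the factorization is a point the paper leaves implicit.
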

\begin{proof}
We can write MAS series as:
\begin{equation}
\begin{split}
    MAS = \lambda_s^2 \cdot \eta \cdot \sum_{k=0}^p v_{n_k} 
    + \lambda_a^2 \cdot \eta_a \cdot \sum_{k=0}^p d_k
    +\\+ \lambda_s \cdot \lambda_a \cdot \eta_a \cdot \sum_{k=0}^p v_{n_k}
    + \lambda_s \cdot \lambda_a \cdot \eta \cdot \sum_{k=0}^p d_{k}
\end{split}
\end{equation}
This can be rewritten for $p \to \infty$ as:
\begin{equation}
    \label{eq:converg_numb}
    MAS = \lambda_s^2 \cdot \eta \cdot m_1 + \lambda_a^2 \cdot \eta_a \cdot m_2 + \lambda_s \cdot \lambda_a \cdot \eta_a \cdot m_1 + \lambda_s \cdot \lambda_a \cdot \eta \cdot m_2
\end{equation}

\end{proof}

\begin{figure}
    \centering
    \includegraphics[width=.8\columnwidth]{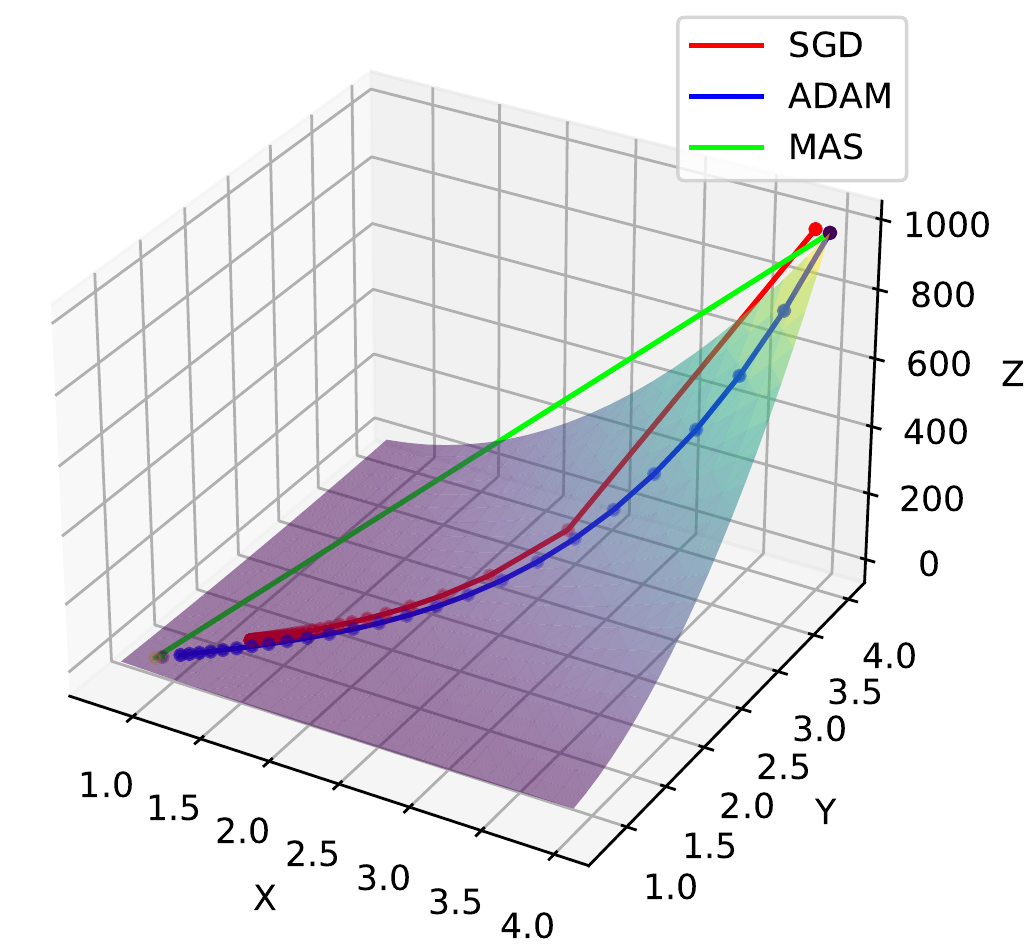}
    \caption{
    Behavior of the three optimizers MAS, ADAM and SGD on the surface defined in Eq.\ref{eq:toy_surface}.
    For better visualization the SGD was shifted on X axis of $0.1$.
    }
    \label{fig:toy}
\end{figure}

\subsection{Geometric explanation}
We can see optimizers as two explorers $w_1$ and $w_2$ who want to explore an environment (the surface of a loss function).
If the two explorers agree to go in a similar direction, then they quickly go in that direction ($w_1 + w_2$).
Otherwise, if they disagree and each prefers a different direction than the other, then they proceed more cautiously and slower ($w_1 + w_2$).
As we can see in Fig.~\ref{fig:intuitive}a, if the directions of the displacement of $w_1$ and $w_2$ are similar then the amplitude of the resulting new displacement $w_1 + w_2$ is increased, however,  as shown in Fig.~\ref{fig:intuitive}b, if the directions of the two displacements $w_1$ and $w_2$ are not similar then the amplitude of the new displacement $w_1 + w_2$ has decreased.

In our approach, the sum $w_1+w_2$ is weighted (see red vectors in Fig.~\ref{fig:intuitive}a) so one of the two optimizers SGD or ADAM  can become more relevant than the other in the choice of direction for MAS, hence the direction resultant may tend towards one of the two.
In MAS we set the weight of the two contributions so as to have a sum $\lambda_1 + \lambda_2 = 1$ in order to maintain a learning rate of the same order of magnitude.

Another important component that greatly affects the MAS shift module at each training step is its learning rate defined in Eq.~\ref{eq:proposed} which combines $\eta$ and $\eta_a$.
The shifts are scaled using the learning rate, so there is a situation where MAS gets more thrust than the ADAM and SGD starting shifts.
In particular, we can imagine that the displacement vector of ADAM has a greater magnitude than SGD and the learning rate of SGD is greater than that of ADAM.
In this case, the MAS shift has a greater vector magnitude than SGD and a higher ADAM learning rate which can cause a large increase in the MAS shift towards the search of a minimum.

\begin{figure}
    \centering
    \includegraphics[width=0.8\columnwidth]{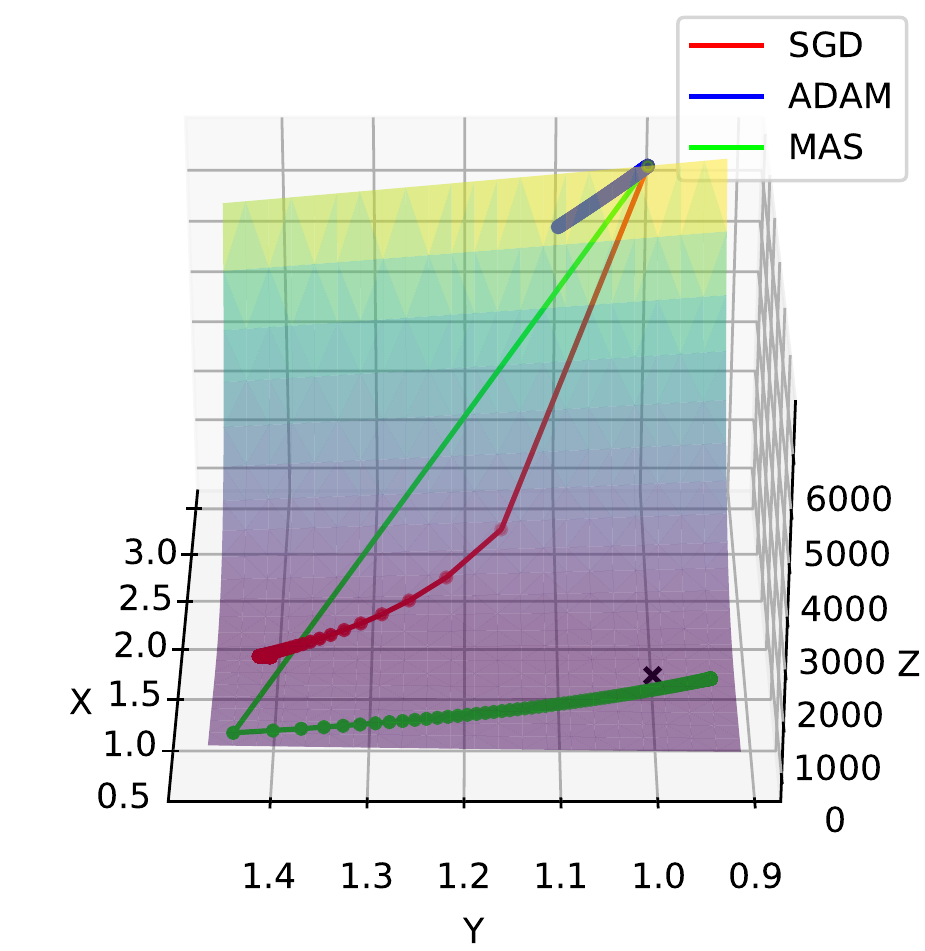}
    \caption{
    Behavior of the three optimizers MAS, ADAM and SGD on the Rosenbrook's surface with $a=1$ and $b=100$}
    \label{fig:toy_rosenbrook}
\end{figure}

\begin{figure}
    \centering
    \includegraphics[width=.8\columnwidth]{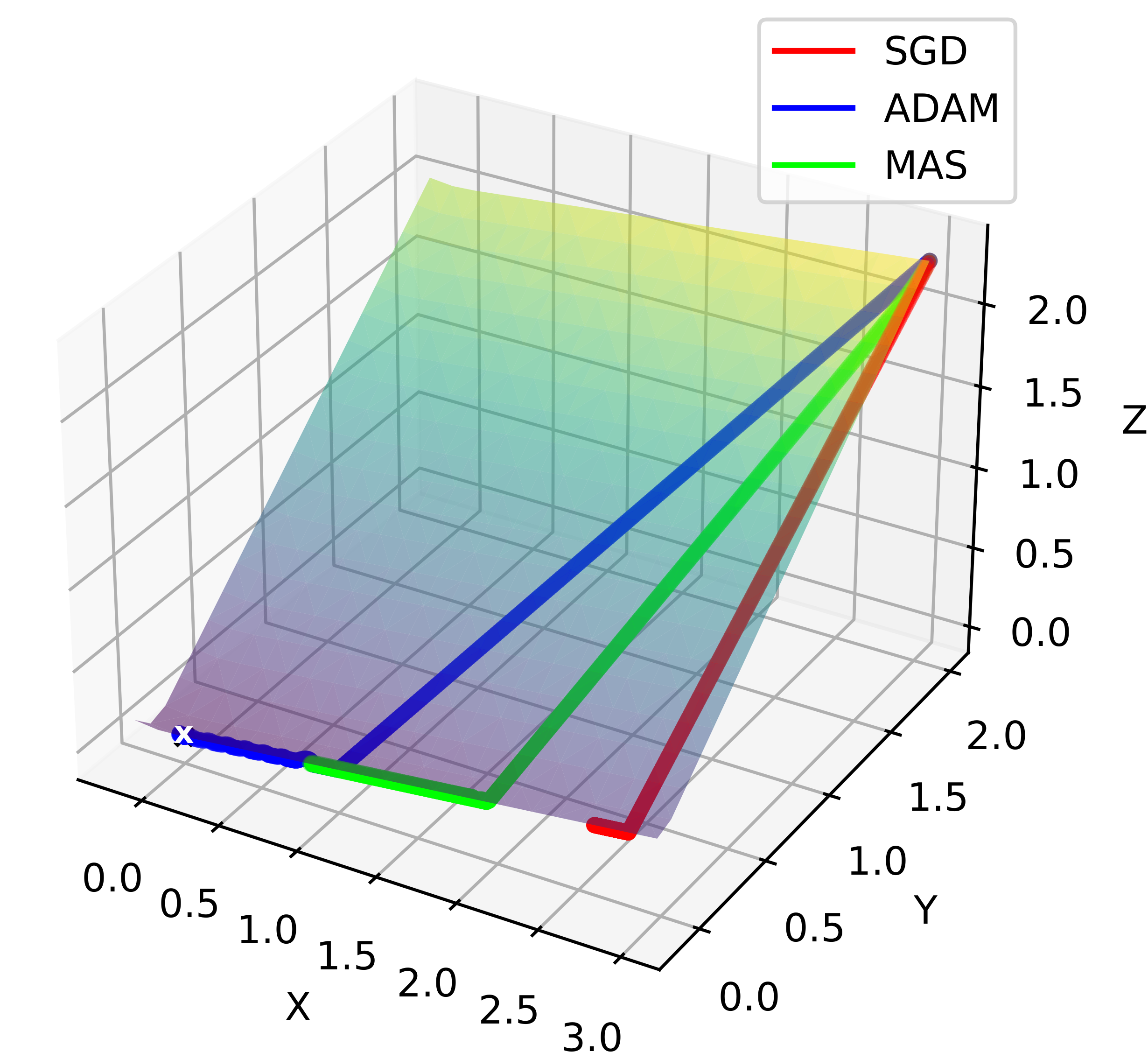}
    \caption{
    Behavior of the three optimizers MAS, ADAM and SGD on the surface
    $z = \frac{\lvert x \lvert} {10} + \lvert y \lvert$ }
    \label{fig:toy_another}
\end{figure}

\subsection{Toy examples}
To better understand our proposal, we built a toy example where we highlight the main behaviour of MAS.
More precisely we consider the following example:
\begin{equation}
    x = [1,2],~y = [2,4]
\end{equation}
\begin{equation}
    p_i = w_1 \cdot(w_2 \cdot x_i)
\end{equation}
\begin{equation}
    \label{eq:toy_surface}
    \mathcal{L}(w_1, w_2) = \sum_{i=0}^1 (p_i - y_i^2)
\end{equation}
We set $\beta_1=0.9$, $\beta_2=0.999$, $\epsilon=10^{-8}$, $amsgrad=False$, dampening $d=0$, $nesterov=False$ and $\mu=0$.
As we can see in Fig.~\ref{fig:toy} our MAS optimizer goes faster towards the minimum value after only two epochs, SGD is fast at the first epoch, however, it decreases its speed soon after and comes close to the minimum after 100 epochs, ADAM instead reaches its minimum after 25 epochs.
Our approach can be fast when it gets a large $v_k$ from SGD and a large $\eta_a$ from ADAM.

Another toy example can be done with the benchmark Rosenbrook~\cite{rosenbrock1960automatic} function:
\begin{equation}
    z = (a - y)^2 + b \cdot (y-x^2)^2
\end{equation}
We set $a=1$ and $b=100$, weight $x=3$ and weight $y=1$, $lr=0.0001$, $epochs=1000$, and default paramiter for ADAM and SGD. 
The MAS optimizer sets $\lambda_s  = \lambda_a = 0.5$.
The comparative result for the minimization of this function is shown in Fig.~\ref{fig:toy_rosenbrook}.
In this experiment, we can see how by combining the two optimizers ADAM and SGD we can obtain a better result than the single optimizers.
For this function, going from the starting point towards the direction of the maximum slope means moving away from the minimum, and therefore it takes a lot of training epochs to approach the minimum.

Let's use a final toy example to highlight the behavior of the MAS optimizer.
In this case we look for the minimum of the function $z = \frac{\lvert x \lvert} {10} + \lvert y \lvert$.
We set the weights $x = 3$  and $y = 2$, $lr =  0.01$, $epochs = 400$  and use all the default parameters for ADAM and SGD.
MAS assigns the same value $0.5$  for the two lamdas hyper-parameters.
In Fig.~\ref{fig:toy_another} we can see how not all the paths between the paths of ADAM and SGD are the best choice. 
Since MAS, as shown in Fig.\ref{fig:intuitive}, goes towards an average direction with respect to that of ADAM and SGD, then in this case ADAM arrives first at the minimum point.

\begin{figure}
    \centering
    \includegraphics[width=\columnwidth]{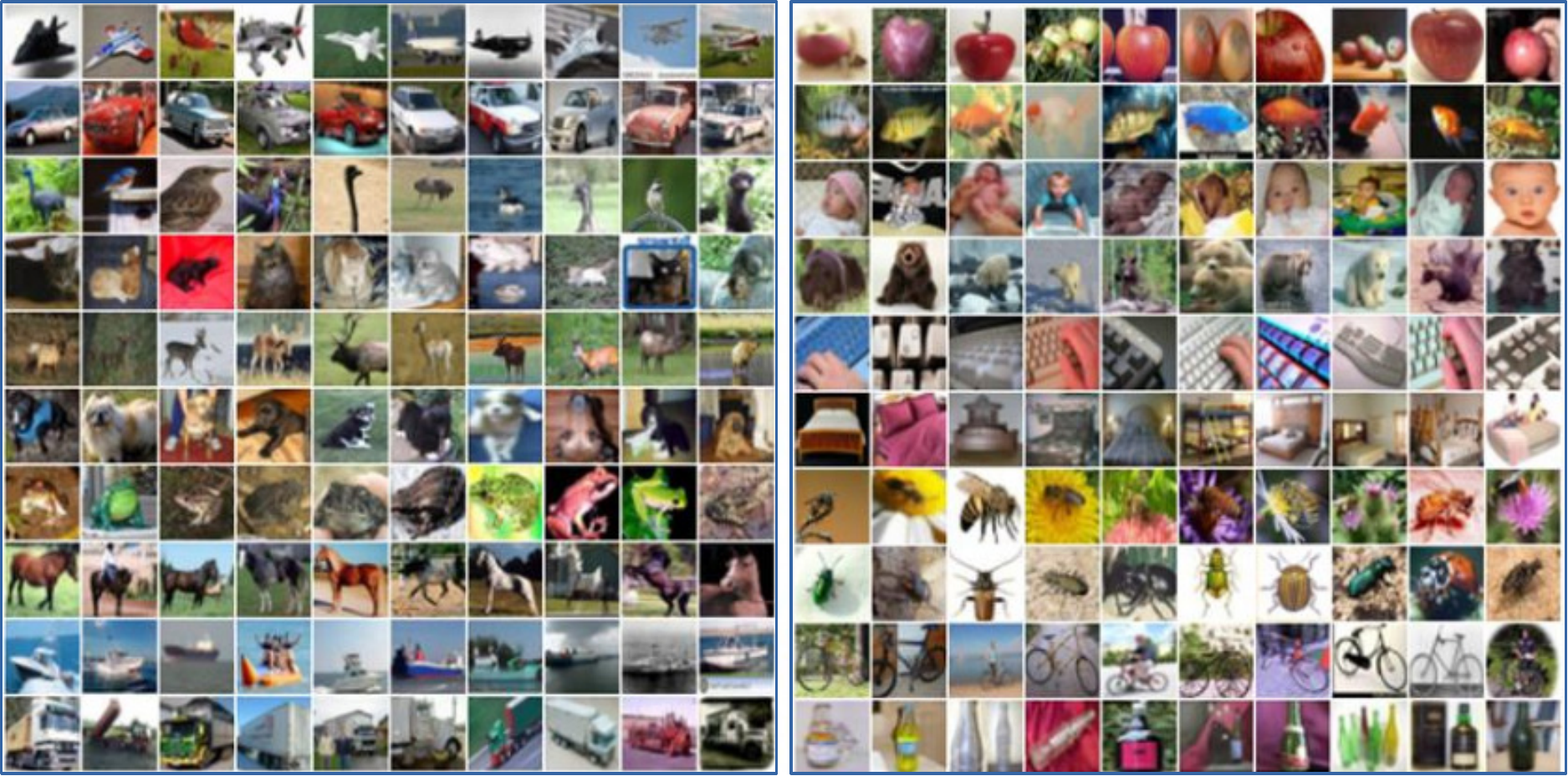}
    \caption{On the left some sample images for each of the 10 classes of CIFAR-10 (one class for each row). On the right 10 classes randomly selected from the set of 100 classes of CIFAR-100.}
    \label{fig:cifar10-100}
\end{figure}

\section{Datasets}
In this section, we briefly describe the datasets used in the experimental phase.

The \textbf{Cifar10}~\cite{cifar} dataset consists of 60,000 images divided into 10 classes (6000 per class) with a training set size and test set size of 50000 and 10000 respectively.
Each input sample is a low-resolution color image of size $32 \times 32$.
The 10 classes are airplanes, cars, birds, cats, deer, dogs, frogs, horses, ships, and trucks.

\begin{table}
    \caption{Accuracy results on Cifar10, after 6 runs of 200 epochs.}
    \label{tab:accuracyCifar10}
    \begin{center}
        \begin{tabular}{llccccc} 
            \hline
            Name & $\lambda_a$ & $\lambda_s$ & avg. acc & acc max\\
            \hline
            Resnet18 &  & & \\
            \hline
            Adam & 1 & 0 & 84.68 & 86.24 \\
            SGD & 0 & 1 & 78.87 & 79.19 \\
            MAS & 0.5 & 0.5 & 85.36 & 85.80 \\
            MAS & 0.4 & 0.6 & 85.64 & 86.59 \\
            MAS & 0.6 & 0.4 & \textbf{85.89} & 86.56 \\
            MAS & 0.7 & 0.3 & 85.39 & 86.09 \\
            MAS & 0.3 & 0.7 & 85.57 & \textbf{86.85} \\
            \hline
            Resnet34 & & & \\
            \hline
            Adam & 1 & 0 & 82.98 & 83.52 \\
            SGD & 0 & 1 & 82.92 & 83.25 \\
            MAS & 0.5 & 0.5 & 84.99 & 85.69 \\
            MAS & 0.4 & 0.6 & \textbf{85.75} & 86.12\\
            MAS & 0.6 & 0.4 & 84.63 &  85.27 \\
            MAS & 0.7 & 0.3 & 84.46 & 84.80 \\
            MAS & 0.3 & 0.7 & 85.71 & \textbf{86.14} \\
            \hline
        \end{tabular}
    \end{center}
\end{table}

The \textbf{Cifar100} \cite{cifar} dataset consist of 60000 images divided in 100 classes (600 per classes) with a training set size and test set size of 50000 and 10000 respectively.
Each input sample is a $32\times 32$ colour images with a low resolution.
In Fig.~\ref{fig:cifar10-100} we report some representative examples of the two datasets, extracted from a subset of classes.

The \textbf{Corpus of Linguistic Acceptability} (CoLA)~\cite{warstadt2018neural} is another dataset which contains 9594 sentences belonging to training and validation sets, and excludes 1063 sentences belonging to a set of tests kept out. In our experiment, we only used the training set and the test set.

The \textbf{AG’s news corpus}~\cite{agnews,zhang2015character} is the last dataset used in our experiments. It is a dataset that contains news articles from the web subdivided into four classes. It has 30,000 training samples and 1900 test samples.

\begin{table}
    \caption{
    Accuracy results on Cifar100, after 7 runs of 200 epochs.}
    \label{tab:accuracyCifar100}
    \begin{center}
        \begin{tabular}{llccccc} 
            \hline
            Name & $\lambda_a$ & $\lambda_s$ & avg acc. & acc max\\
            \hline
            Resnet18 &  & & \\
            \hline
            Adam & 1 & 0 & 49.56 & 50.28 \\
            SGD & 0 & 1 & 49.48 & 50.43 \\
            MAS & 0.5 & 0.5 & 55.08 & 56.68 \\
            MAS & 0.4 & 0.6 & 56.23 & 56.83 \\
            MAS & 0.6 & 0.4 & 53.82 & 54.44 \\
            MAS & 0.7 & 0.3 & 52.92 & 54.07 \\
            MAS & 0.3 & 0.7 & \textbf{58.01} & \textbf{58.48} \\
            \hline
            Resnet34 & & & \\
            \hline
            Adam & 1 & 0 & 50.66 & 51.92 \\
            SGD & 0 & 1 & 52.81 & 53.45 \\
            MAS & 0.5 & 0.5 & 51.52 & 53.26 \\
            MAS & 0.4 & 0.6 & 51.73 & 53.48 \\
            MAS & 0.6 & 0.4 & 52.15 & 53.95 \\
            MAS & 0.7 & 0.3 & \textbf{53.06} & \textbf{54.50} \\
            MAS & 0.3 & 0.7 & 51.96 & 53.32 \\
            \hline
        \end{tabular}
    \end{center}
\end{table}

\section{Experiments}
The optimizer MAS proposed is a generic solution not oriented exclusively to image analysis, so we conduct experiments on both image classification and text document classification.
By doing so, we are able to give a clear indication of the behavior of the proposed optimizer in different contexts, also bearing in mind that many problems, such as audio recognition, can be traced back to image analysis.
In all the experiments $\beta_1=0.9$, $\beta_2=0.999$, $\epsilon=10^{-8}$, $amsgrad=False$, dampening $d=0$ and $nesterov=False$.

\subsection{Experiments with images}
In this first group of experiments we use two well-known image datasets for: (1) conduct an analysis of the two main parameters of MAS, $\lambda_a$ and $\lambda_s$; (2) compare the performance of MAS with respect to the two starting optimizers SGD and ADAM; (3) analyze the behavior of MAS with different neural models.

The datasets used in this first group of experiments are Cifar10 and Cifar100.
The neural models compared are two ResNet and in particular, we use Resnet18 and Resnet34~\cite{He2015,targ2016resnet}.
We analyzed $\lambda_a$ and $\lambda_s$ when they assume values from the set $\{0.3, 0.4, 0.5, 0.6, 0.7\}$ so that $\lambda_a + \lambda_s = 1$.
The numerical results are all grouped in the two Tabs.~\ref{tab:accuracyCifar10} and ~\ref{tab:accuracyCifar100}.

%
For Cifar10 we report the average accuracies calculated on 6 runs of each experiment and for 200 epochs.
We set $\eta = 0.001$, momentum $\mu = 0.95$, and batch size equal to 1024.
From the results reported in Tab.~\ref{tab:accuracyCifar10}, we can observe that the best average results are obtained in correspondence with $\lambda_s = 0.6$ and $\lambda_a = 0.4$ but we can also note that all the other settings used for MAS allow obtaining better results than ADAM and SGD for both Resnet18 and Resnet34.
To better understand what happens during the training phase, in Fig.~\ref{fig:cifar10-test-accuracies} we represent the accuracy of the test and the corresponding loss values of the experiments that produced the best results with Resnet18 on Cifar10.
As we can see even though the best execution of ADAM has the lowest loss value, our optimizer with $\lambda_a = 0.3$ and $\lambda_s = 0.7$ offers better testing accuracy.
The other combination represented with $\lambda_a = 0.5 $ and $\lambda_s = 0.5$ is also very similar to the better accuracy of ADAM and in any case always better than the average accuracy produced by ADAM.
So in general we can say that the MAS optimizer leads to a better generalize than the other optimizers used.

\begin{figure}
    \centering
    \includegraphics[width=0.9\columnwidth]{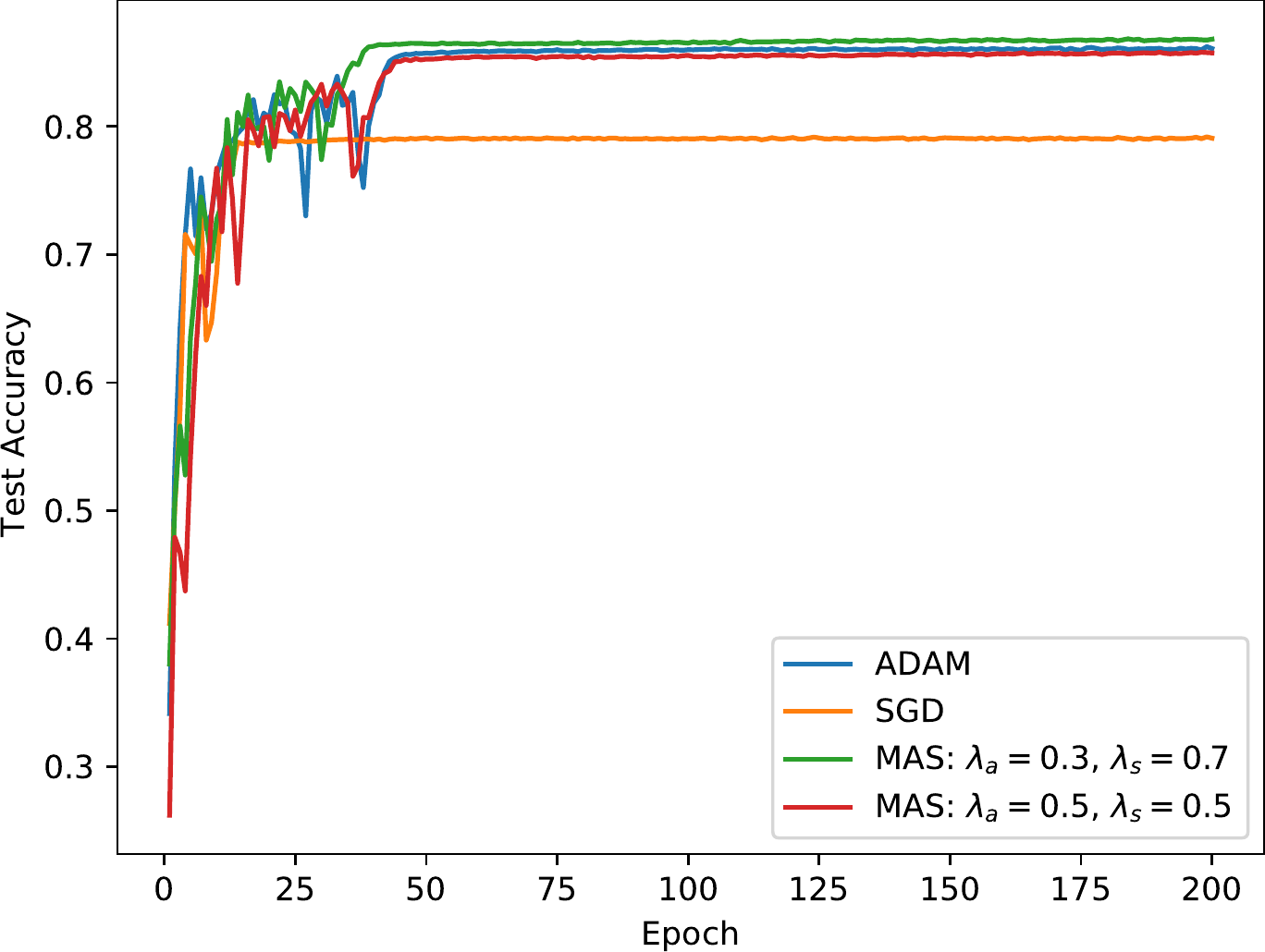}\\
    \includegraphics[width=0.9\columnwidth]{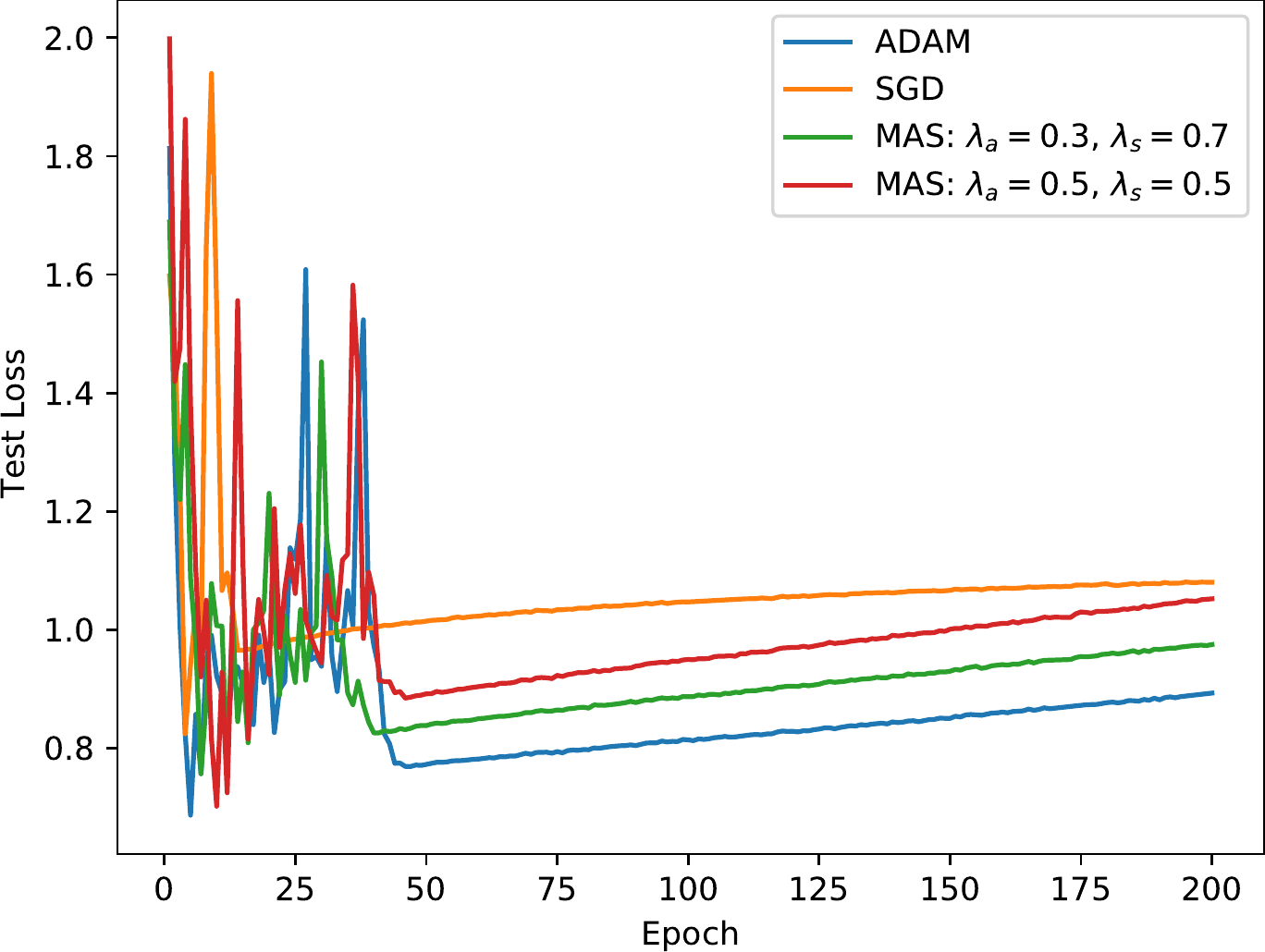}
    \caption{Resnet18 test accuracies and test loss of the best results obtained on Cifar10.}
    \label{fig:cifar10-test-accuracies}
\end{figure}

For Cifar100 we use the same settings as the Cifar10 experiment except for $\eta = 0.008$ and batch size set to 512.
We can see the results in Tab.~\ref{tab:accuracyCifar100}.
Also for this dataset, we can see that for Resnet18 all MAS configurations work better than SGD and ADAM.
Instead, looking at the results obtained with the Resnet34, we can say that only one configuration of MAS exceeds the average accuracies of SGD and ADAM, but if we look at the maximum accuracy values, more than one configuration of MAS is better than the results available with ADAM and SGD.

In conclusion, as we have seen from the results shown in Tab.~\ref{tab:accuracyCifar10} and Tab.~\ref{tab:accuracyCifar100}, the proposed method leads to a better generalization than the other optimizers used in each experiment.
We get better results both by setting $\lambda_a$ and $\lambda_s$ well, and also even when we don't use the best set of parameters.

\subsection{Experiments with text documents}



In this last group of experiments, we use the two datasets of text documents: CoLA and AG's News.
As neural model, we use a model based on BERT~\cite{devlin2018bert} which is one of the best techniques for working with text documents.
To run fewer epochs we use a pre-trained version~\cite{bert_pretrained} of BERT.
In these experiments, we also introduce the comparison with the AdamW optimizer which is usually the optimizer used in BERT-based models.

For the CoLA dataset we set $\eta =  0.0002$, momentum $\mu =  0.95$, and batch size equal to 100. We ran the experiments 5 times for 50 epochs.
For the AG's News dataset we set the same parameters used for CoLA, but we only run it for 10 epochs because it gets good results in the firsts epochs and also because the dataset is very large and therefore takes more time.

We can see all the results in Tab.~\ref{tab:accuracyCoLA}.
Even for text analysis problems, we can confirm the results of the experiments done on images: although AdamW sometimes has better performances than ADAM, our proposed optimizer performs better than other optimizers used in this paper.

\begin{table}
    \caption{Accuracy results of BERT pre-trained on CoLA (50 epochs) and AG's news (10 epochs), after 5 runs.}
    \label{tab:accuracyCoLA}
    \begin{center}
        \begin{tabular}{llccccc} 
            \hline
            Name & $\lambda_a$ & $\lambda_s$ & avg acc. & acc max\\
            \hline
            CoLA &  & & \\
            \hline
            AdamW & - & - & 78.59 & 85.96 \\
            Adam & 1 & 0 & 79.85 & 83.30 \\
            SGD & 0 & 1 & 81.48  & 81.78 \\
            MAS & 0.5 & 0.5 & 85.92 & 86.72 \\
            MAS & 0.4 & 0.6 & 86.18 & \textbf{87.66} \\
            MAS & 0.6 & 0.4 & 85.45 & 86.34  \\
            MAS & 0.7 & 0.3 & 84.66 & 85.78 \\
            MAS & 0.3 & 0.7 & \textbf{86.34} & 86.91  \\
            \hline
            AG's News & & & \\
            \hline
            AdamW & - & - & 92.62 & 92.93 \\
            Adam & 1 & 0 & 92.55 &  92.67 \\
            SGD & 0 & 1 & 91.28 & 91.39\\
            MAS & 0.5 & 0.5 & 93.72 & 93.80 \\
            MAS & 0.4 & 0.6 & 93.82 & 93.98 \\
            MAS & 0.6 & 0.4 & 93.55 & 93.67 \\
            MAS & 0.7 & 0.3 & 93.19 & 93.32 \\
            MAS & 0.3 & 0.7 & \textbf{93.86} & \textbf{93.99} \\
            \hline
        \end{tabular}
    \end{center}
\end{table}

\section{Conclusion}
In this paper, we introduced MAS (Mixing ADAM and SGD) a new Combined Optimization Method that combines the capability of two different optimizers into one. 
We demonstrate by experiments the capability of our proposal to overcome the single optimizers used in our experiments and achieve better performance.
To balance the contribution of the optimizers used within MAS, we introduce two new hyperparameters $\lambda_a$, $\lambda_s$ and show experimentally that in almost all configurations of these parameters, the results are better than the results obtained with the other single optimizers.
In future work, it is possible to change ADAM and SGD and try to mix different optimizers also without the limitations of using only two optimizers.
Another significant future work is to try to change dynamically during the training the influence (lambda hyper-parameters) of the two combined optimizers, in the hypothesis that this can improve further the generalization performance. 


{\small
\bibliographystyle{ieeetr}
\bibliography{main}
}

\end{document}